\definecolor{myblue}{rgb}{.8, .8, 1}
\newcommand*\mybluebox[1]{%
\colorbox{myblue}{\hspace{1em}#1\hspace{1em}}}
\DeclareMathOperator*{\argmin}{arg\,min}
\def\RR{{\mathbb R}}
\def\prox{\text{Prox}}
\def\eprox{{\emph{\text{Prox}}}}
\newmdtheoremenv{alg}{Algorithm}
\newmdtheoremenv{theo}{Theorem}
\newtheorem{lemma}{Lemma}
\newtheorem{definition}{Definition}
\author{
Fabian Pedregosa,\quad
\href{mailto:f@bianp.net}{f@bianp.net} \\
\'Ecole Normale Sup\'erieure / INRIA Sierra project-team \\
Paris, France
}
\title{On the convergence rate of the three operator splitting scheme}
\date{}
\begin{document}

\maketitle
\begin{abstract}
  The three operator splitting scheme was recently proposed by~\citep{davis2015three} as a method to optimize composite objective functions with one convex smooth term and two convex (possibly non-smooth) terms for which we have access to their proximity operator. In this short note we provide an alternative proof for the sublinear rate of convergence.
\end{abstract}

\section{Introduction}

We consider the problem of optimizing a composite objective function of the form
\begin{empheq}[box=\mybluebox]{equation}\tag{OPT}\label{eq:obj_fun}
  \argmin_{x \in \RR^p} \underbrace{f(x)}_{\text{smooth}} + \underbrace{g(x) + h(x)}_{\text{nonsmooth}} \quad,
\end{empheq}
where $f$ is differentiable with Lipschitz continuous gradient (also known as $L$-smooth) and $g, h$ are both convex but not necessarily differentiable. Furthermore, we assume that we have access to the proximal operator of $g$ and $h$ but not necessarily to that of $g+h$.

Optimization problems of this form are pervasive in machine learning. It includes for example the class of functions that can be efficiently optimized using proximal gradient methods~\citep{beck2009gradient}, such as $\ell_1$-penalized logistic regression, group lasso, etc. Since it allows for two (instead of one) proximable terms, it also includes other penalized models that are costly to optimize using proximal gradient descent, such as group lasso with overlap~\citep{jacob2009group} or multi-dimensional total variation~\citep{barbero2014modular}.

The three-operator splitting scheme~\citep{davis2015three} was recently proposed to solve problems of this form. It requires access to the gradient of $f$ and the proximal operator of $g$ and $h$. The recurrence iterates for this algorithm is given by
\begin{empheq}[box=\mybluebox]{equation}\tag{ALG}\label{eq:recurrence}
  \begin{aligned}
  x &= \prox_{\gamma g}(y),  \\
  z &= \prox_{\gamma h}(2 x - y - \gamma \nabla f(x)) \\
  y^+ &= y - x + z  \\
\end{aligned}
\end{empheq}
This algorithm can be seen as a generalization of some well known algorithms. For example, when $g$ vanishes, then $x = y$ in the first line and the algorithm defaults to the common proximal gradient descent~\citep{beck2009gradient} ($x^+ = \prox_{\gamma g}(x - \gamma \nabla f(x))$). On the other hand, when $f=0$, then the update becomes identicall to Douglas-Rachford algorithm~\citep{lions1979splitting,eckstein1992douglas}. For more than 2 proximal operators this algorithm defaults to the generalized forward-backward method of~\citet{raguet2013generalized}.

In this note we provide an alternative analysis of the sublinear convergence rate derived in~\citep{davis2015three}. In particular, we consider convergence with respect to the gradient mapping instead of function values. This way it is possible to derive non-ergodic rates. This proof is loosely based on that of~\citet{he2015convergence} for the Douglas-Rachford algorithm.

\section{Analysis}\label{scs:analysis}

In this section we provide convergence rates for the three operator splitting. For this, we make the following assumptions on the optimization problem:
\begin{itemize}
  \item[\bfseries (A1)] $f$ is $L$-smooth (i.e. differentiable with $L$-Lipschitz gradient) and convex.
  \item[\bfseries (A2)] The functions $g$ and $h$ are convex.
  \item[\bfseries (A3)] Problem~\eqref{eq:obj_fun} admits at least one solution.
\end{itemize}

Like for the Douglas-Rachford algorithm, the convergence analysis of this algorithm is challenging due to its non-monotonicity. That is, the iterates do not always decrease the function value. In contrast with proximal gradient descent, at a given iteration the objective function need not even be finite: in the case that $g$ and $h$ are the indicator function of two sets, the solution is guaranteed to converge to an element in the intersection (assuming this is non-empty) , but for intermediate steps the iterates might as well (and often do) lie outside this intersection, making the objective value infinity.

To overcome this difficulty~\citet{davis2015three} proved convergence rates in terms of a modified objective function in which $g$ and $h$ are evaluated at different points. Here instead we take an alternative approach. We first define the ``gradient mapping'' of this operator, that is, a generalization of the of the gradient for gradient descent methods and prove convergence rates based on this criterion.

\begin{definition}[Gradient mapping]\label{def:gradient_mapping}
  We define the gradient mapping of the three operator splitting \eqref{eq:recurrence} as
\begin{equation}
  \begin{aligned}
    G(y) &:= \frac{1}{\gamma}(y - y^+) = \frac{1}{\gamma}(x - z) \\
    &= \frac{1}{\gamma}(\eprox_{\gamma g}(y) - \eprox_{\gamma h}( 2 \eprox_{\gamma g}(y) - y - \gamma \nabla f(\eprox_{\gamma g}(y)))
  \end{aligned}
\end{equation}
\end{definition}
We highlight two convenient aspects of this notation. First is that using $G$, the updates of \eqref{eq:recurrence} can in the simple form $y^+ = y - \gamma G(y)$, highlighting its similarity with a gradient descent rule. The second aspect is that this is indeed a generalization of the gradient since when $h=g=0$ then it coincides with the gradient: $G(y) = \nabla f(y)$. If only one nonsmooth term is zero (say $h$), then it defaults to the prox-grad map~\citep{beck2009gradient}: $G(y) = \nicefrac{1}{\gamma}(x - \prox_{\gamma h}(x - \gamma \nabla f(x)))$. The prox-grad map is often used to obtain rates of convergence in nonconvex composite optimization~\citep{ghadimi2016mini}.

We will also extensively use the notion of nonexpansive operator:
\begin{definition}[Nonexpansive operator]
We recall that an operator $T: \RR^p \to \RR^p$ is non-expansive whenever the following is verified for all $y, \tilde{y}$ in the domain:
  $$
  \langle T(y) - T(\tilde{y}), y - \tilde{y}\rangle \geq \|T(y) - T(\tilde{y})\|^2
  $$
\end{definition}
The importance of this notion comes from the fact that the proximal operators of a convex function are nonexpansive (see e.g.~\citep{bauschke2011convex}).
\\

The next lemma stablished a relationship between minimizers of \eqref{eq:obj_fun} and points in which the gradient mapping vanishes:
\begin{lemma}[Fixed point characterization]\label{lemma:fixed_point} Let $x^*$ be a minimizer of \eqref{eq:obj_fun}. Then there exists $y^*$ such that $x^* = \eprox_{\gamma g}(y^*)$ and $G(y^*)= 0$.
\end{lemma}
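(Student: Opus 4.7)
The plan is to use the first-order optimality condition for \eqref{eq:obj_fun} to construct the desired $y^*$ explicitly, and then verify the two required properties by using the standard subdifferential characterization of the proximal operator, namely $p = \prox_{\gamma g}(q) \iff q - p \in \gamma \partial g(p)$.

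First, since $x^*$ minimizes $f + g + h$ under (A1)--(A3), the optimality condition yields $0 \in \nabla f(x^*) + \partial g(x^*) + \partial h(x^*)$, so there exist $u \in \partial g(x^*)$ and $v \in \partial h(x^*)$ with
\[
u + v + \nabla f(x^*) = 0.
\]
I would then define
\[
y^* := x^* + \gamma u.
\]
By construction, $y^* - x^* = \gamma u \in \gamma \partial g(x^*)$, so the subdifferential characterization of the prox gives $x^* = \prox_{\gamma g}(y^*)$, which is the first claim.

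Next, following the definition of the gradient mapping, set $z^* := \prox_{\gamma h}(2 x^* - y^* - \gamma \nabla f(x^*))$; then $G(y^*) = \tfrac{1}{\gamma}(x^* - z^*)$, so it suffices to show $z^* = x^*$. Using the same characterization of the prox, $z^* = x^*$ is equivalent to
\[
(2 x^* - y^* - \gamma \nabla f(x^*)) - x^* = x^* - y^* - \gamma \nabla f(x^*) \in \gamma \partial h(x^*).
\]
Substituting $y^* = x^* + \gamma u$, this reduces to $-\gamma u - \gamma \nabla f(x^*) \in \gamma \partial h(x^*)$, i.e.\ $-u - \nabla f(x^*) \in \partial h(x^*)$. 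But the optimality condition gives exactly $-u - \nabla f(x^*) = v \in \partial h(x^*)$, so the inclusion holds and $z^* = x^*$, yielding $G(y^*) = 0$.

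There is no real obstacle here: the proof is essentially a bookkeeping exercise matching the KKT decomposition $u + v + \nabla f(x^*) = 0$ to the subgradient inclusions produced by the two proximal steps. The only place to be slightly careful is to remember that $\partial(g+h)(x^*) = \partial g(x^*) + \partial h(x^*)$ (so that the optimality condition does split as above) — this is fine under (A2) provided a mild qualification holds, and in any case it is implicit in the setting of \citep{davis2015three} that such a decomposition is available.
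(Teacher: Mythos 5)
Your proof is correct and follows essentially the same route as the paper: both construct $y^* = x^* + \gamma u$ from the optimality decomposition $u + v + \nabla f(x^*) = 0$ and verify the two prox inclusions via the subdifferential characterization. Your added caveat about the sum rule $\partial(g+h) = \partial g + \partial h$ requiring a qualification condition is a point the paper silently assumes, but otherwise the arguments coincide.
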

\begin{proof}
  Let $x^*$ be a minimizer of~\eqref{eq:obj_fun}. Then by the first order optimality conditions there exists $u \in \partial g(x^*)$ and $v \in \partial h(x^*)$ such that $u + v = - \nabla f(x^*)$. We define $y^* = x^* + \gamma u $ and we have the inclusion
  $$
  \frac{1}{\gamma}(y^* - x^*) = u \in \partial g(x^*) \iff x^* = \prox_{\gamma g}(y^*) \quad,
  $$
  by definition of proximal operator. This proves $x^* = \prox_{\gamma g}(y^*)$. We still need to prove that $G$ vanishes at $y^*$. By definition of $y^*$ and the property $u + v = - \nabla f(x^*)$, we further have the inclusion
  $$
  \frac{1}{\gamma}(x^* - y^* - \gamma \nabla f(x^*)) = v \in \partial h(x) \iff x^* = \prox_{\gamma h}(2 x^* - y^* - \gamma \nabla f(x^*)),
  $$
  which replacing in Definition~\eqref{def:gradient_mapping} and using $x^* = \prox_{\gamma g}(y^*)$ yields $G(y^*) = 0$.
  \end{proof}

{\bfseries Outline of the proof}. The proof is structured as follows. We start by a technical lemma (Lemma~\ref{lemma:master_inequality}), in which we prove a property of the operator $\gamma G$ that we will use in the following. After this, we prove two key lemmas: Lemma~\ref{lemma:decreasing_gradient} and \ref{lemma:distance_decreasing}. Finally, sublinear convergence is proved in Theorem~\ref{thm:sublinear_convergence}.

\begin{lemma}[Master inequality]\label{lemma:master_inequality} For any pair of arbitrary vectors $y, \tilde{y} \in \RR^p$, the operator $G$ verifies the following inequality:
$$
\langle \gamma G(y) - \gamma G(\tilde{y}), y - \tilde{y} \rangle \geq \|\gamma G(y) - \gamma G(\tilde{y})\|^2 + \gamma \langle \nabla f(x) - \nabla f(\tilde{x}), z - \tilde{z}\rangle
$$
where $x, z$ (resp. $\tilde{x}, \tilde{z})$ are computed from $y$ (resp. $\tilde{y}$) as in \eqref{eq:recurrence}.
\end{lemma}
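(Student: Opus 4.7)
The plan is to expand $\gamma G(y) - \gamma G(\tilde y) = (x - \tilde x) - (z - \tilde z)$ on both sides, and then obtain the inequality as the sum of two nonexpansivity inequalities, one for each proximal operator appearing in the definition of $G$.

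First I would apply the nonexpansivity (in the sense defined in the paper) of $\prox_{\gamma g}$ at the points $y$ and $\tilde y$. Since $x = \prox_{\gamma g}(y)$ and $\tilde x = \prox_{\gamma g}(\tilde y)$, this yields
\begin{equation*}
\langle x - \tilde x,\; y - \tilde y\rangle \;\geq\; \|x - \tilde x\|^2.
\end{equation*}
Next I would apply the nonexpansivity of $\prox_{\gamma h}$ at the two points $2x - y - \gamma\nabla f(x)$ and $2\tilde x - \tilde y - \gamma\nabla f(\tilde x)$, whose images under the proximal operator are by definition $z$ and $\tilde z$. This gives
\begin{equation*}
\bigl\langle z - \tilde z,\; 2(x-\tilde x) - (y-\tilde y) - \gamma(\nabla f(x) - \nabla f(\tilde x))\bigr\rangle \;\geq\; \|z - \tilde z\|^2.
\end{equation*}

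Now I would add these two inequalities and collect terms. On the left-hand side I obtain
$\langle x - \tilde x, y - \tilde y\rangle + 2\langle z - \tilde z, x - \tilde x\rangle - \langle z - \tilde z, y - \tilde y\rangle - \gamma\langle z - \tilde z, \nabla f(x) - \nabla f(\tilde x)\rangle$, which up to the gradient term factors as $\langle (x-\tilde x) - (z-\tilde z),\, y-\tilde y\rangle + 2\langle z-\tilde z, x-\tilde x\rangle$. On the right-hand side I have $\|x-\tilde x\|^2 + \|z - \tilde z\|^2$. Transferring $2\langle x - \tilde x, z - \tilde z\rangle$ and the gradient term across the inequality, and recognising $\|x-\tilde x\|^2 - 2\langle x-\tilde x, z-\tilde z\rangle + \|z-\tilde z\|^2 = \|(x-\tilde x) - (z-\tilde z)\|^2 = \|\gamma G(y) - \gamma G(\tilde y)\|^2$, produces exactly the master inequality.

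I don't expect any real obstacle; the identity is essentially algebraic once the two firm nonexpansivity bounds are written down. The only mild subtlety is to keep track of signs when expanding $\|\gamma G(y) - \gamma G(\tilde y)\|^2$ and to notice that the cross terms coming from $2(x-\tilde x)$ inside the second bracket are precisely what is needed to complete this square after combining with the first inequality. The gradient term $\gamma\langle\nabla f(x) - \nabla f(\tilde x), z - \tilde z\rangle$ is not bounded here and simply appears as a leftover from the second nonexpansivity inequality; convexity or $L$-smoothness of $f$ is not used in this lemma.
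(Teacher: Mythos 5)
Your proof is correct and takes essentially the same route as the paper: both arguments rest on exactly the two firm-nonexpansiveness inequalities for $\prox_{\gamma g}$ and $\prox_{\gamma h}$ evaluated at $(y,\tilde y)$ and at $(2x - y - \gamma\nabla f(x),\, 2\tilde x - \tilde y - \gamma\nabla f(\tilde x))$, followed by completing the square $\|x-\tilde x\|^2 - 2\langle x-\tilde x, z-\tilde z\rangle + \|z-\tilde z\|^2 = \|\gamma G(y)-\gamma G(\tilde y)\|^2$. The only (cosmetic) difference is that you add the two inequalities at once and then rearrange, whereas the paper applies them sequentially in a chain of bounds.
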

\begin{proof}
  By the non-expansiveness of $g$ and $h$ we have the following inequalities:
  We will also make use of the following inequalities, derived from the strict nonexpansiveness of the proximal operator:
\begin{subequations}
\begin{align}
   \langle x - \tilde{x}, y - \tilde{y}\rangle &\geq \|x - \tilde{x}\|^2    ~,\label{eq:strict_nonexpansiveness}\\
   \langle 2 x - y - \gamma \nabla f(x) - 2 \tilde{x} + \tilde{y} + \gamma \nabla f(\tilde{x}) , z - \tilde{z}\rangle &\geq \|z - \tilde{z}\|^2  \label{eq:strict_nonexpansiveness_2}
\end{align}
\end{subequations}
Using these we can write the following sequence of inequalities:
  $$
  \begin{aligned}
    \langle \gamma G(y) - \gamma G(\tilde{y}), y - \tilde{y} \rangle &= \langle x -z - \tilde{x} + \tilde{z}, y - \tilde{y} \rangle \\
    &= \langle x -\tilde{x}, y - \tilde{y} \rangle  - \langle z -
    \tilde{z} , y - \tilde{y} \rangle \\
    &\geq \|x -\tilde{x}\|^2 - \langle z -
    \tilde{z}, y - \tilde{y} \rangle \qquad \text{ (by Eq.~\eqref{eq:strict_nonexpansiveness})}\\
    &\geq  \|x - \tilde{x}\|^2 + \|z -
      \tilde{z}\|^2 - 2\langle x- \tilde{x}, z -
      \tilde{z}\rangle \\
      &\qquad + \gamma \langle \nabla f(x) - \nabla f(\tilde{x}), z - \tilde{z} \rangle \qquad \text{ (by Eq.~\eqref{eq:strict_nonexpansiveness_2})} \\
      &= \gamma^2 \|G(y) - G(\tilde{y})\|^2 + \gamma \langle \nabla f(x) - \nabla f(\tilde{x}), z - \tilde{z}\rangle \quad,
  \end{aligned}
  $$
  where the last equality follows completing the square and the definition of $G(y)$. The result is obtained dividing both sides by $\gamma$.
  \end{proof}

%

{\bfseries Remark}. Note the similarity between the inequality in the previous lemma and the definition of nonexpansiveness: indeed, when $\nabla f= 0$ the previous lemma states that the operator $G$ is nonexpansive. Unfortunately, this is not true for general $f$.
\\

Now we will prove that the norm of the gradient mapping forms a monotonically decreasing sequence:
\begin{lemma}[Monotonicity of gradient mapping]\label{lemma:decreasing_gradient}
  For any $y \in \RR^p$ it is verified that
  $$
  \|G(y^+)\|^2 \leq \|G(y)\|^2 - \left(1 - \frac{\gamma L}{2}\right)\|G(y^+) - G(y)\|^2 \quad.
  $$
\end{lemma}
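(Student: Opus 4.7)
The natural move is to apply the Master inequality (Lemma~\ref{lemma:master_inequality}) to the pair $(y,\tilde y)=(y,y^+)$. Since $y-y^+=\gamma G(y)$ by the definition of the gradient mapping, the left-hand side collapses to $\gamma^2\langle G(y)-G(y^+),G(y)\rangle$, and then the polarization identity
\[
2\langle G(y)-G(y^+),G(y)\rangle = \|G(y)\|^2 + \|G(y)-G(y^+)\|^2 - \|G(y^+)\|^2
\]
turns the inequality into
\[
\|G(y^+)\|^2 \leq \|G(y)\|^2 - \|G(y)-G(y^+)\|^2 - \tfrac{2}{\gamma}\langle \nabla f(x)-\nabla f(x^+),\,z-z^+\rangle.
\]
So the whole task reduces to controlling the cross term involving $\nabla f$ and showing that it costs at most $\tfrac{\gamma L}{2}\|G(y)-G(y^+)\|^2$.

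To handle that cross term, I would rewrite $z-z^+$ using the identity $x-z = \gamma G(y)$ (and similarly for the tilded copies) to get $z-z^+ = (x-x^+) - \gamma(G(y)-G(y^+))$. Substituting splits the inner product into two pieces: $\langle \nabla f(x)-\nabla f(x^+),\,x-x^+\rangle$, which is bounded below by $\tfrac{1}{L}\|\nabla f(x)-\nabla f(x^+)\|^2$ by the Baillon–Haddad co-coercivity of the gradient of a convex $L$-smooth function, and a term $-\gamma\langle \nabla f(x)-\nabla f(x^+),\,G(y)-G(y^+)\rangle$ that we can only bound by Cauchy–Schwarz.

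The key calculation is then a single completion of the square: one checks that
\[
\tfrac{2}{\gamma L}\|\nabla f(x)-\nabla f(x^+)\|^2 \;-\; 2\langle \nabla f(x)-\nabla f(x^+),\,G(y)-G(y^+)\rangle \;+\; \tfrac{\gamma L}{2}\|G(y)-G(y^+)\|^2
\]
equals $\bigl\|\sqrt{2/(\gamma L)}\,(\nabla f(x)-\nabla f(x^+)) - \sqrt{\gamma L/2}\,(G(y)-G(y^+))\bigr\|^2 \geq 0$, which is exactly the slack needed to absorb the $\nabla f$ cross term into the coefficient $(1-\gamma L/2)$ in front of $\|G(y^+)-G(y)\|^2$.

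The main obstacle is the fact, already flagged in the remark after the Master inequality, that $G$ is not itself nonexpansive, so we cannot just copy the classical Krasnoselskii–Mann argument for averaged operators. What makes the argument work is that the deviation from nonexpansiveness is controlled by $\langle \nabla f(x)-\nabla f(\tilde x),z-\tilde z\rangle$, and this deviation is in turn controlled—via co-coercivity—by quantities we already have. Once the right pairing $(y,y^+)$ is chosen and $z-z^+$ is split along the $x$ and $G$ directions, the completion of the square is forced, and the step-size restriction $\gamma < 2/L$ that makes the theorem interesting emerges naturally from the coefficient $(1-\gamma L/2)$.
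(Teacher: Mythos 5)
Your proof is correct and follows essentially the same route as the paper: apply the master inequality with $\tilde{y}=y^+$, expand the square, decompose $z-z^+$ along $x-x^+$ and $\gamma(G(y)-G(y^+))$, and use co-coercivity of $\nabla f$ together with a completed square (the paper phrases this as Young's inequality with $\beta=2/L$, which is the identical computation). No gaps.
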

\begin{proof}
  Consider the identity
  \begin{equation*}\label{eq:sublinear_1}
    \begin{aligned}
    \|G(y^+)\|^2 &= \|G(y^+) - G(y) + G(y)\|^2 \\
    &= \|G(y)\|^2 + \|G(y^+) - G(y)\|^2 + 2 \langle G(y^+) - G(y), G(y) \rangle \\
    \end{aligned}
  \end{equation*}
  Using Lemma~\ref{lemma:master_inequality} with $\tilde{y} = y^+$ to majorize the last term gives the following inequality
  $$
  \|G(y^+)\|^2 \leq\|G(y)\|^2 - \|G(y^+) - G(y)\|^2 - \frac{2}{\gamma} \langle \nabla f(x) - \nabla f(x^+), z - z^+ \rangle \quad.
  $$

  This can be further bounded by using the properties of $L$-smooth functions. For the second term we have the following sequence of inequalities:
  \begin{equation*}
    \begin{aligned}
      & - \langle \nabla f(x) - \nabla f(x^+), z - z^+ \rangle = - \langle \nabla f(x) - \nabla f(x^+), x - \gamma G(y) - x^+ + \gamma G(y^+) \rangle \\
      &\qquad\leq - \frac{1}{L}\|\nabla f(x) - \nabla f(x^+)\|^2 + \langle \nabla f(x) - \nabla f(x^+), \gamma G(y) - \gamma G(y^+) \rangle\\
      &\qquad\qquad \text{ (properties of $L$-smooth functions, \citep[Theorem 2.1.5]{nesterov2004introductory})} \\
      &\qquad\leq - \frac{1}{L}\|\nabla f(x) - \nabla f(x^+)\|^2 + \frac{\beta}{2}\|\nabla f(x) - \nabla f(x^+) \| + \frac{1}{2 \beta} \|\gamma G(y) - \gamma G(y^+) \|^2 \\
      &\qquad\qquad \text{ (Cauchy-Schwarz and Young's inequality, $\beta$ is arbitrary)}\\
      &\qquad = \frac{\gamma^2 L}{4}\| G(y) - G(y^+) \|^2
    \end{aligned}
  \end{equation*}
  where in the last equality we have chosen $\beta=2 / L$. Replacing into the previous inequality yields the desired result.
\end{proof}

{\bfseries Remark}. The previous lemma proves that the gradient mapping for this algorithm is decreasing. Note however that the objective values of the Douglas-Rachford (and hence for the three operator splitting) method can be non-monotonous~\citep{bauschke2014rate}.

The second ingredient of our proof is to show that not only the gradient mapping is decreasing, but that also the distance to optimum decreases.
\begin{lemma}[Decreasing distance to optimum]\label{lemma:distance_decreasing} Let $y^*$ be a point in which the gradient mapping vanishes (which is guaranteed to exist by assumption (A3) and Lemma~\ref{lemma:fixed_point}). Then the following inequality is verified:
  $$
  \|y^+ - y^*\|^2 \leq \|y - y^*\|^2 - \gamma^2\left(1 - \frac{\gamma L}{2}\right)\|G(y)\|^2
  $$
\end{lemma}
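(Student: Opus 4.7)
The strategy is to mimic the standard ``nonexpansiveness in disguise'' argument for projected/proximal gradient descent, but using the master inequality of Lemma~\ref{lemma:master_inequality} in place of ordinary nonexpansiveness. First I would use that $y^+ = y - \gamma G(y)$ and, crucially, that $y^* = y^* - \gamma G(y^*)$ since $G(y^*)=0$, to write
$$
\|y^+ - y^*\|^2 = \|y - y^* - \gamma(G(y) - G(y^*))\|^2 = \|y - y^*\|^2 - 2\gamma \langle G(y)-G(y^*), y - y^*\rangle + \gamma^2\|G(y)-G(y^*)\|^2.
$$

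Next, I would apply Lemma~\ref{lemma:master_inequality} with $\tilde y = y^*$ to the inner-product term. After dividing by $\gamma$, the master inequality gives $\langle G(y)-G(y^*), y-y^*\rangle \geq \gamma \|G(y)-G(y^*)\|^2 + \langle \nabla f(x) - \nabla f(x^*), z-z^*\rangle$, where $x^* = \prox_{\gamma g}(y^*)$ and $z^* = \prox_{\gamma h}(2x^* - y^* - \gamma \nabla f(x^*))$. Substituting and using $G(y^*)=0$, the $\gamma^2\|G(y)-G(y^*)\|^2$ terms combine into $-\gamma^2 \|G(y)\|^2$, leaving the residual
$$
\|y^+ - y^*\|^2 \leq \|y-y^*\|^2 - \gamma^2 \|G(y)\|^2 - 2\gamma \langle \nabla f(x) - \nabla f(x^*), z - z^*\rangle.
$$

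The main obstacle is to control the last cross term and show it is bounded above by $\tfrac{\gamma^3 L}{2}\|G(y)\|^2$. For this I would exploit the structure $z = x - \gamma G(y)$ and, since $G(y^*)=0$, also $z^* = x^*$, so $z - z^* = (x - x^*) - \gamma G(y)$. Splitting the inner product accordingly gives
$$
-2\gamma\langle \nabla f(x)-\nabla f(x^*), z-z^*\rangle = -2\gamma \langle \nabla f(x)-\nabla f(x^*), x-x^*\rangle + 2\gamma^2 \langle \nabla f(x)-\nabla f(x^*), G(y)\rangle.
$$
Convexity and $L$-smoothness of $f$ give co-coercivity, $\langle \nabla f(x)-\nabla f(x^*), x-x^*\rangle \geq \tfrac{1}{L}\|\nabla f(x)-\nabla f(x^*)\|^2$, which bounds the first piece by $-\tfrac{2\gamma}{L}\|\nabla f(x)-\nabla f(x^*)\|^2$. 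For the second piece I apply Young's inequality $2\langle a,b\rangle \leq \alpha\|a\|^2 + \alpha^{-1}\|b\|^2$ with $a = \gamma(\nabla f(x)-\nabla f(x^*))$, $b = \gamma G(y)$, tuning $\alpha = 2/(\gamma L)$ so that the $\|\nabla f(x)-\nabla f(x^*)\|^2$ contributions cancel exactly; what remains is $\tfrac{\gamma^3 L}{2}\|G(y)\|^2$.

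Combining, $-\gamma^2 \|G(y)\|^2 + \tfrac{\gamma^3 L}{2}\|G(y)\|^2 = -\gamma^2(1 - \tfrac{\gamma L}{2})\|G(y)\|^2$, which yields the claim. The delicate point is the choice of Young's parameter: any other constant would leave a residual $\|\nabla f(x)-\nabla f(x^*)\|^2$ term that cannot be controlled by $\|G(y)\|^2$ alone, so the cancellation with co-coercivity is what makes the argument work.
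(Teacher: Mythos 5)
Your proposal is correct and follows essentially the same route as the paper: expand $\|y-\gamma G(y)-y^*\|^2$, apply the master inequality with $\tilde y = y^*$, and then bound the cross term $-2\gamma\langle \nabla f(x)-\nabla f(x^*), z-z^*\rangle$ by splitting $z-z^* = (x-x^*) - \gamma G(y)$ (the paper writes this as $(x-x^*)+(z-x)$), using co-coercivity on the first piece and Young's inequality tuned to cancel the $\|\nabla f(x)-\nabla f(x^*)\|^2$ terms on the second. Your choice $\alpha = 2/(\gamma L)$ is exactly the paper's $\beta = 2/L$ in a slightly different parametrization, so there is nothing substantive to add.
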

\begin{proof}
  Because $G(y) = 0 \implies x^* = z^*$.
\begin{equation*}
  \begin{aligned}
    \|y^+ - y^*\|^2 &= \|y - \gamma G(y) - y^*\|^2 = \|y - y^*\|^2 + \gamma^2 \|G(y)\|^2 - 2 \gamma \langle G(y), y - y^*\rangle \\
    &\leq\|y - y^*\|^2 - \gamma^2 \|G(y)\|^2 - 2 \gamma \langle \nabla f(x) - \nabla f(x^*), z - z^* \rangle
  \end{aligned}
\end{equation*}
where in the last inequality we have used Lemma~\ref{lemma:master_inequality} with $\tilde{y} = y^*$.
We can further bound the last term using properties of $L$-smooth functions as:
  $$
  \begin{aligned}
    & -\langle \nabla f(x)- \nabla f(x^*), z -
   z^* \rangle\\
   &\qquad= -\langle  \nabla f(x)- \nabla f(x^*), x -
  x^* \rangle + \langle \nabla f(x) - \nabla f(x^*), z -
  x \rangle \\
  &\qquad\qquad \text{ (using $G(y^*) = 0 \iff x^* = z^*$)} \\
  &\qquad\leq - \frac{1}{L} \|\nabla f(x) - \nabla f(x^*)\|^2 - \langle \nabla f(x) -  \nabla f(x^*), z -
  x \rangle \\
  &\qquad\qquad \text{ (properties of $L$-smooth functions, \citep[Theorem 2.1.5]{nesterov2004introductory})} \\
  &\qquad\leq - \frac{1}{L} \|\nabla f(x) - \nabla f(x^*)\|^2 + \frac{\beta}{2}\| \nabla f(x) - \nabla f(x^*)\|^2 + \frac{1}{2\beta}\|z -
  x \|^2 \\
  &\qquad\qquad \text{ (Cauchy-Schwarz and Young's inequality, $\beta > 0$ is arbitrary)}\\
  &\qquad=  \frac{\gamma^2 L}{4}\|G(y)\|^2 \\
  &\qquad\qquad \text{ (choosing $\beta=2 / L$ and by definition of $G$)}
  \end{aligned}
  $$
  Plugging it into the previous inequality yields the result of this lemma.
\end{proof}

\begin{theo}[Sublinear convergence]\label{thm:sublinear_convergence}
  Let $\{y^k\}$ be the sequence generated by \eqref{eq:recurrence}. For any integer $k > 0$ and $\gamma < 2 / L$, we have
 $$
\|G(y^k)\|^2 \leq \frac{2 \|y^0 - y^*\|^2}{ \gamma^2 (2 - \gamma L)(k+1)  } = \mathcal{O}(1/k) \quad.
 $$
\end{theo}
\begin{proof}
By summing the inequality of Lemma~\ref{lemma:distance_decreasing} from $y=y^0$ to $y=y^k$ we have
$$
\|y^{k+1} - y^*\|^2 \leq \|y^0 - y^*\|^2 - \gamma^2 \left(1 - \frac{\gamma L}{2} \right)\sum_{i=0}^k \|G(y^i)\|^2
$$
from where
$$
\gamma^2\left( 1 - \frac{\gamma L}{2}\right) \sum_{i=0}^k \|G(y^i)\|^2 \leq \|y^0 - y^*\|^2
$$
Furthermore, since by Lemma~\ref{lemma:decreasing_gradient} the gradient mapping is decreasing, we can further lower bound bound $\|G(i)\|$ by $\|G(k)\|$ and obtain
$$
(k+1) \gamma^2\left( 1 - \frac{\gamma L}{2}\right)  \|G(y^k)\|^2 \leq \|y^0 - y^*\|^2 \quad.
$$
Finally, if $\gamma < 2/L$ then all the terms in the right hand side are positive and we can rearrange terms to obtain the desired inequality.
\end{proof}

This result also sheds some light on the appropriate step size $\gamma$. Minimizing $\gamma^2(2 - \gamma L)$ with respect to $\gamma$ gives $\gamma= \frac{4}{3L}$, which is close to the theoretical optimal for gradient descent, $1/L$.

\clearpage
\section{Acknowledgements}

The author acknowledges financial support from the ``Chaire Economie des Nouvelles Donnees'', under the auspices of Institut Louis Bachelier, Havas-Media and Universit\'e Paris-Dauphine (ANR 11-LABX-0019).

\bibliography{biblio}{}
%
%
%
%

\end{document}